\newcommand{\nosection}[1]{\vspace{2pt}\noindent\textbf{#1.}}
\begin{document}
\title{Secret Sharing based Secure Regressions with Applications}
%

\author{Chaochao Chen$^{1}$ \and Liang Li$^{1}$ \and Wenjing Fang$^{1}$ \and Jun Zhou$^{1}$ \and Li Wang$^{1}$ \and \\Lei Wang$^{1}$ \and Shuang Yang$^{1}$ \and Alex Liu$^{1}$ \and Hao Wang$^{2}$ \\
\{chaochao.ccc,liangli.ll,bean.fwj,jun.zhoujun,raymond.wangl\}@antfin.com\\
\{shensi.wl,shuang.yang,alexliu\}@antfin.com, wanghao@sdnu.edu.cn
}
\authorrunning{Chaochao Chen et al.}
\institute{$^{1}$Ant Financial Services Group, $^{2}$Shandong Normal University \\}

\maketitle              
\begin{abstract}
Nowadays, the utilization of the ever expanding amount of data has made a huge impact on web technologies while also causing various types of security concerns. On one hand, potential gains are highly anticipated if different organizations could somehow collaboratively share their data for technological improvements. On the other hand, data security concerns may arise for both data holders and data providers due to commercial or sociological concerns. 
To make a balance between technical improvements and security limitations, we implement secure and scalable protocols for multiple data holders to  train linear regression and logistic regression models. We build our protocols based on the secret sharing scheme, which is scalable and efficient in applications. Moreover, our proposed paradigm can be generalized to any secure multiparty training scenarios where only matrix summation and matrix multiplications are used. 
We demonstrate our approach by experiments which shows the scalability and efficiency of our proposed protocols, and finally present its real-world applications.

\end{abstract}

\keywords{Linear regression, logistic regression, shared machine learning, secret sharing}

\section{Introduction}
With the ever expanding collection of data, there have been increasing concerns on the security and privacy issues when utilizing big data to facilitate technological improvement. On the customer side, privacy concerns arise when individual information is collected with potential risk of leakage \cite{chen2018privacy}, while on the collector side, there are security concerns since the collectors are always willing to protect their own resources including data.

Meanwhile, there is increasingly potential gains in terms of analytical power if different organizations could collaboratively combine their data assets for data mining or information retrieval. 
For example, health data from different hospitals can be used together to facilitate more accurate diagnosis \cite{li2019federated}, while financial companies can collaborate to train more effective fraud-detection engines \cite{zheng2020industrial}.

The major task is to combine data from multiple entities so as to improve model training performance while still protecting data securities for individual holders from any possibility of information disclosure, i.e., \textit{shared machine learning} in literature \cite{wu2019generalization,liu2020privacy,zheng2020industrial,chen2020secure,chen2020practical}. 
The security and privacy concern, together with the desire of data combination poses an important challenge for both academia and industry. 

To derive an application-oriented solution to the above challenges, we design secure multiparty protocols based on the classical secret sharing scheme, which is efficient in both computational and communicational costs. We restrict our attention to the simple yet widely used regression models in industry, i.e., linear regression and logistic regression, and consider two scenarios (vertical and horizontal) of data partitioning among data holders. 




\subsection{Our Contribution}
We summarize the main contributions as follows. 
\begin{itemize} [leftmargin=*] \setlength{\itemsep}{-\itemsep}
\item We make a computational reduction from the multiparty training of regression models to secure multiparty matrix summation and secure two party matrix multiplications. 
This paradigm is quite different from previous work in this area \cite{mohassel2017secureml} since for the gradient descent step in Equation \eqref{batch-update}, although the calculation of subtraction can be implemented easily via secret sharing, the matrix multiplications can not be easily generalized to arbitrary number of parties and so their training protocols are not applicable to multiparty cases. 
Our paradigm takes the secure two-party matrix multiplication protocol as a complete `black-box', which means any secure matrix product protocols can be ``embedded'' into our paradigm to train regression models. 
By reducing the matrix multiplications for multiple data holders into matrix multiplications for two parties, we can even generalize our paradigm to any training process other than linear regression and logistic regression. As long as the training process involves only summation (subtraction) and multiplication, our method can be applied. 
We also remark that for some special functions like sigmoid functions, we can apply polynomial approximations to transform the special functions into summation and multiplication.
\item We conduct empirical studies on our proposed method and experimentally verify that our protocols achieve exactly the same performance while the additional overhead for the computation and communication costs scale linearly with data size. This demonstrates that our proposed method can be applied into large scale datasets.
\end{itemize}

\subsection{Related Work}
Secure Multi-Party Computation (MPC) was initiated in \cite{yao1982protocols}, which aims to generate methods (or protocals) for multi-parties to jointly compute a function (e.g., vector multiplication) over their inputs (e.g., vector for each party) while keeping those inputs private. 
Different schemes can be used to design MPC protocols, such as garbled circuits \cite{yao1986generate} and secret sharing \cite{shamir1979share}. 
MPC makes the secure collaboration between different data holders possible, and has been applied to implement many secure machine learning algorithms, such as decision tree \cite{lindell2005secure}, linear regression \cite{nikolaenko2013privacy,GasconSB0DZE17,mohassel2017secureml,Karr2010a,hall2011secure}, logistic regression \cite{demmler2015aby,mohassel2017secureml,kim2018secure}, neural networks \cite{mohassel2017secureml,zheng2020industrial}, and recommender system \cite{chen2020secure,chen2020practical}. 

The most similar work to ours is SecureML \cite{mohassel2017secureml}, however, we are different from them in mainly three aspects. 
First, since the SGD update step in \cite{mohassel2017secureml} is fully secret-shared for each matrix operation in Equation \eqref{batch-update}, secure matrix product cannot be easily generalized to multiple parties. 
Second, SecureML adopts the techniques of arithmatic sharing, boolean sharing, and Yao sharing for secure machine learning. Although it provides fruitful protocols for academic researchers to build any machine learning models, it also increases the difficulty for industrial practitioners deploy customized models. 
Thirdly, our protocol under vertically data partition setting has less communication complexity, since we only need to secretly share models and labels instead of features. 
These are the main differences between the training protocol in \cite{mohassel2017secureml} and ours. 

\section{Preliminaries}

\subsection{Linear Regression and Logistic Regression}

\nosection{Linear Regression}
Linear gression is popularly used in industry due to its simply yet robust ability.  
Given $m$ data samples $\textbf{x}_i \in \mathbb{R}^d$ each with $d$ dimensional input features and output labels $y_i\in \mathbb{R}$, \textit{regression} is a statistical method to learn a fitting function $f: \mathbb{R}^d \rightarrow \mathbb{R}$ such that $f(\textbf{x}_i) \approx y_i$ for all $i \in [m]$. 
Note that, without loss of generality, we use italic text (e.g., $y_i$) for variables, lower-case bold letters (e.g., $\textbf{w}$) for vectors, and upper-case bold letters (e.g., \textbf{X}) for matrices. 
In linear regression, we assume that $f$ can be represented as a linear combination of the input features, i.e., $f(\textbf{x}_i) = \sum_{j = 1}^{d} x_{ij}w_j = \textbf{x}_i\cdot\textbf{w}$, where $x_{ij}$ is the $j$th feature of sample $\textbf{x}_i$ and $w_j$ is the $j$th coefficient of $\textbf{w}$. The value $\hat{y}_i = f(\textbf{x}_i)$ is called the \textit{predictive value} on $\textbf{x}_i$.
To learn the coefficient vector $\textbf{w}$, square loss is commonly used:
\begin{equation}
L_{li}(\textbf{w}) = \frac{1}{2}\sum_{i = 1}^m(\hat{y}_i - y_i)^2,
\end{equation}
which measures the total difference between the predictive values and the true labels. The objective is to minimize this loss function, which is also known as the \textit{least squares estimation}.

\nosection{Logistic Regression}
Logistic regression is also widely used for binary classification tasks in practice, in which each data sample $\textbf{x}_i \in \mathbb{R}^d$ has an output label $y\in \{0, 1\}$. The fitting function $f$, also called the \textit{activation function} in this setting, can be represented as the \textit{logistic function} of the weighted combination of the input features: $f(\textbf{x}_i) = \frac{1}{1 + e^{-\textbf{x}_i\cdot \textbf{w}}}$.

To train the coefficients $\textbf{w}$ of the model, an information-theoretic loss function called \textit{cross entropy} is used: 
\begin{equation}
L_{lo}(\textbf{w}) = -\sum_{i = 1}^m\left(y_i\log \hat{y}_i + (1-y_i)\log (1-\hat{y}_i)\right),
\end{equation}
where $\hat{y}_i = f(\textbf{x}_i)$ is the predictive value of data sample $\textbf{x}_i$. The cross entropy measures the total distributional difference between the predictive values and the true labels over all data samples and the objective of logistic regression is to minimize it. Note that we omit the regularization terms for conciseness. 

\subsection{Mini-Batch Stochastic Gradient Descent}
Mini-batch stochastic gradient descent (SGD) is an efficient iterative algorithm for optimizing a global function. In linear regression and logistic regression, the goal is to find a best-fit coefficient vector $\textbf{w}$ which minimizes of the loss function. 
Let $\textbf{X}_B$ (resp. $\textbf{y}_B$) be the $|B|\times d$ (resp. $|B|\times 1$) submatrices of $\textbf{X}$ (resp. $\textbf{y}$), where the row indices are all from batch $B$. For linear regression and logistic regression, the update equation for batch SGD is as follows:
\begin{equation}\label{batch-update}
\textbf{w} := \textbf{w} - \frac{\alpha}{|B|}\textbf{X}_B^T\times(\hat{\textbf{y}_B} - \textbf{y}_B),
\end{equation}
where $\alpha$ is the learning rate which controls the moving magnitude. 

\subsection{Secure Multiparty Computation}
Suppose $n$ parties individually hold private data $x_1, x_2, \cdots, x_n$ and $g$ is a multivariate function with $n$ variables over arbitrary domain (e.g. boolean, integer, or real). Secure multiparty computation aims at designing protocols for the $n$ parties to collaboratively compute $g(x_1, x_2, \cdots, x_n)$ such that at the end of the protocol, each party gets the value $g(x_1, x_2, \cdots, x_n)$ without being able to explorer any information of other party's private data. 
Specifically, secure multi-party computation is a multi-party random process. 
Take two parties for example, it maps pairs of inputs (one from each party) to pairs of outputs (one for each party), while preserving several security properties, such as correctness, privacy, and independence of inputs \cite{hazay2010efficient}. This random process is called \emph{functionality}. Formally, denote a two-output functionality $f=(f_1,f_2)$ as $f:\{0,1\}^* \times \{0,1\}^* \rightarrow \{0,1\}^* \times \{0,1\}^*$. For a pair of inputs $(x,y)$, where $x$ is from party $P_1$ and $y$ is from party $P_2$, the output pair $(f_1(x,y),f_2(x,y))$ is a random variable. $f_1(x,y)$ is the output for $P_1$, and $f_2(x,y)$ is for $P_2$. 
During this process, neither party should learn anything more than its prescribed output.

There are fruitful methods such as garbled circuit and secret sharing for designing secure protocols, among which the secret sharing scheme is the most efficient one in practice. The very basic idea of secret sharing is that each data holder divide its own data into completely random shares and distribute the shares among different parities. In running the protocol, each party do local computations based on its shares coming from other parties and the final result can be recovered via global communication. 
In this paper, we use $\left\langle x \right\rangle_i$ to denote the $i$-th share of a secret $x$, which can be generalized to vectors and matrices. 
One should note that secret sharing only works in finite field, since random shares should be generated in uniform distribution to guarantee security. We use fix-point approximation following the existing research \cite{mohassel2017secureml}. 

\nosection{Security Model} 
In this paper, we consider the \emph{standard semi-honest model}, where a probabilistic polynomial-time adversary with semi-honest behaviors is considered. 
Take two parties for example, in this security model, the adversary may corrupt and control one party (referred as to {\em the corrupted party}), and try to obtain information about the input of the other party (referred as to {\em the honest party}). During the protocol execution with the honest party, the adversary will follow the protocol specification, but may attempt to obtain additional information about the honest party's input by analyzing the corrupted party's \emph{view}, i.e., the transcripts it receives during the protocol execution. In order to ensure correctness and privacy, the following formal security definition is proposed \cite{goldreich2009foundations}. 

\begin{definition}[Security in semi-honest model]
	
Let $f = (f_1, f_2)$ be a \emph{deterministic} functionality and $\pi$ be a two-party protocol for computing $f$. Given the security parameter $\kappa$, and a pair of inputs $(x,y)$ (where $x$ is from $P_1$ and $y$ is from $P_2$), the view of $P_i$ ($i=1,2$) in the protocol $\pi$ is denoted as $\mathsf{view}^\pi_i(x, y, \kappa) = (w, r_i, m_i^1,\cdots, m_i^t)$, where $w\in\{x, y\}$, $r_i$ is the randomness used by $P_i$, and $m_i^j$ is the $j$-th message received by $P_i$; the output of $P_i$ is denoted as $\mathsf{output}_i^\pi(x, y, \kappa)$, and the joint output of the two parties is $\mathsf{output}^\pi(x, y, \kappa) = (\mathsf{output}_1^\pi(x, y, \kappa) ), \mathsf{output}_2^\pi(x, y, \kappa))$.
We say that $\pi$ securely computes $f$ in semi-honest model if 
\begin{itemize}
	\item There exist probabilistic polynomial-time simulators $\mathcal{S}_1$ and $\mathcal{S}_2$, such that
$$\{\mathcal{S}_1(1^\kappa,x,f_1(x,y))\}_{x,y,\kappa}\cong\{\mathsf{view}_1^\pi(x,y,\kappa)\}_{x,y,\kappa},$$
$$\{\mathcal{S}_2(1^\kappa,y,f_2(x,y))\}_{x,y,\kappa}\cong\{\mathsf{view}_2^\pi(x,y,\kappa)\}_{x,y,\kappa}.$$
	\item The joint output and the functionality output satisfy
	$$\{\mathsf{output}^\pi(x, y, \kappa)\}_{x,y,\kappa}\cong\{f(x,y)\}_{x,y,\kappa},$$
\end{itemize}
where $x, y\in \{0, 1\}^*$, and $\cong$ denotes computationally indistinguishablity.
\end{definition}

\section{Secret Sharing based Secure Multiparty Regressions}

\subsection{Scenarios}\label{sub-sec-sce}

\begin{figure}
\centering
\subfigure [\emph{Data partitioned horizontally}]{ \includegraphics[width=5.5cm]{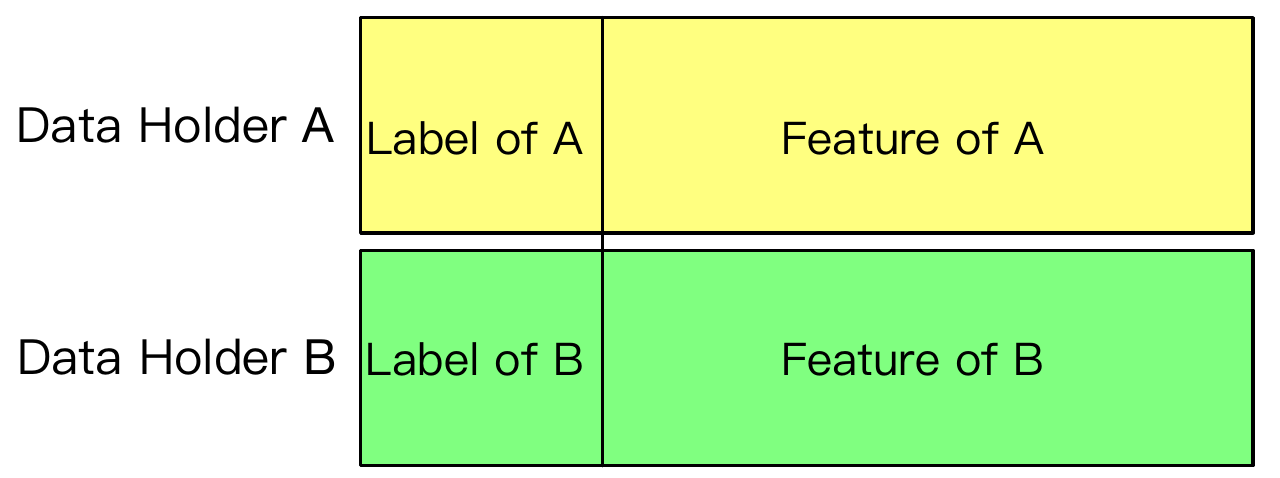}}~~~
\subfigure[\emph{Data partitioned vertically}] { \includegraphics[width=5.5cm]{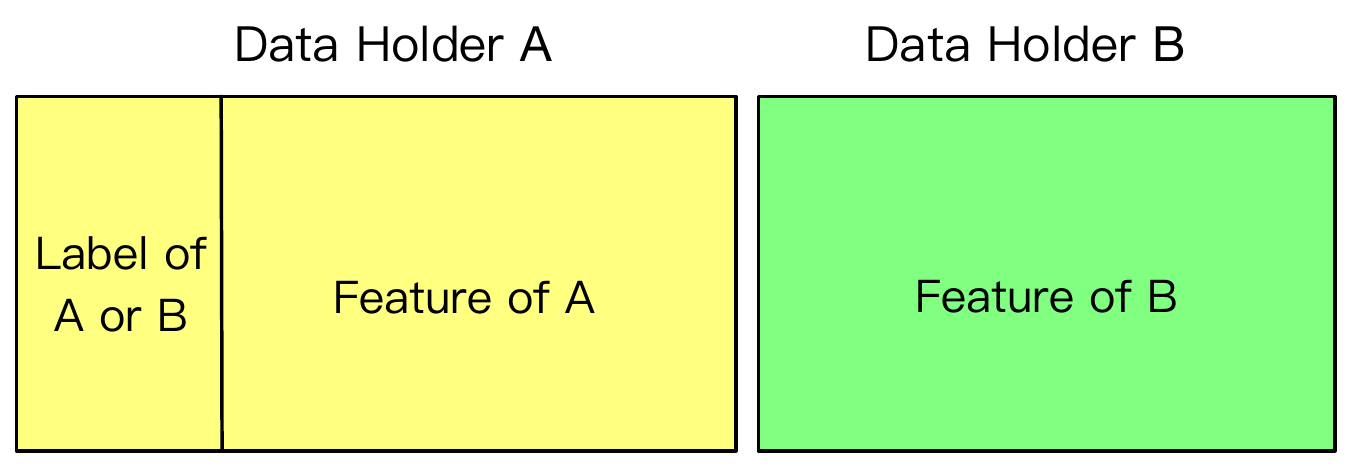}}
\caption{Data partition scenarios.}
\label{data-par}
\end{figure}

We mainly consider two scenarios of data partitioning, as is shown in Figure \label{data-par}. In the first scenario, data are \textit{partitioned horizontally} among multiple parties, which means each party holds a subset of data samples and the data in different parities have the same feature dimensions. In the second scenario, data are \textit{partitioned vertically}, where each party holds a subset of features over all samples and the sample indices of different parties have already been aligned. 

For notational convenience in describing the protocol, we use $\mathcal{A}_i$ to denote the $i$-th party involved in the protocol, where $i\in [n]$ and $n$ is the total number of all parties. Using secret sharing, we use $\left\{\left\langle \textbf{X}\right\rangle_i\right\}_{i\in[n]}$ to denote the set of shares of $\textbf{X}$ such that $\textbf{X} = \sum_{i=1}^n\left\langle \textbf{X}\right\rangle_i$. Here $\textbf{X}$ could be the original data, or secret-shares, or intermediate data during the algorithm procedure. $\left\langle \textbf{X}\right\rangle_i$ is called the $i$-share of data $\textbf{X}$, which is the fraction of data $\textbf{X}$ distributed to $\mathcal{A}_i$.

To securely train linear regression and logistic regression models, the crucial point is to design a protocol for multiple parties such that the SGD step in Equation \eqref{batch-update} can be conducted securely among different parties. We will describe our secure protocols for both scenarios in the following sections.

\begin{algorithm}[p]
\small
\caption{Linear regression protocol for horizontally partitioned data}\label{algo-horLR}
\KwIn {Feature matrices $\textbf{X}^i$ and label vectors $\textbf{y}^i$ for all party $\mathcal{A}_i(i\in[n])$, learning rate ($\alpha$), and maximum iterations ($T$)}
\KwOut{Coefficient vectors $\textbf{w}_i$ such that $\textbf{w} = \sum_{i=1}^n\textbf{w}_i$ where $\textbf{w}$ is the coefficient vector for the regression model}
\For{$i = 1$ to $n$ \texttt{in parallel}}
{
	$\mathcal{A}_i$ initializes $\textbf{w}_i$\\
}
\For{$t=1$ to $T$}
{   
		$n$ parties randomly select $i\in [n]$ via protocol\label{algostep-rand-batch}
	
		$\mathcal{A}_i$ locally samples batches $\textbf{X}^{i}_B$ and $\textbf{y}^i_B$ \\
		$\mathcal{A}_i$ locally generates $\left\{\left\langle\textbf{X}^{i}_B\right\rangle_j\right\}_{j\in [n]}$ and $\left\{\left\langle\textbf{y}^{i}_B\right\rangle_j\right\}_{j\in [n]}$\\
		$\mathcal{A}_i$ distributes $\left\{\left\langle\textbf{X}^{i}_B\right\rangle_j\right\}_{j\neq i}$ and $\left\{\left\langle\textbf{y}^{i}_B\right\rangle_j\right\}_{j\neq i}$ to other parties\\
		$\mathcal{A}_i$ locally calculates $\left\langle\textbf{X}^{i}_B\right\rangle_i \times \textbf{w}_i$ as $i$-share\\
		\For{$j \neq i$}
		{
			$\mathcal{A}_j$ locally calculates $\left\langle\textbf{X}^{i}_B\right\rangle_j \times \textbf{w}_j$ as $j$-share\\
			$\mathcal{A}_i$ and $\mathcal{A}_j$ calculate $i$-share $\left\langle\left\langle\textbf{X}^i_B\right\rangle_i\times \textbf{w}_j\right\rangle_i$ and $j$-share $\left\langle\left\langle\textbf{X}^i_B\right\rangle_i\times \textbf{w}_j\right\rangle_j$ via SMM protocol\\ \label{algostep-smm1}
			$\mathcal{A}_i$ and $\mathcal{A}_j$ calculate $i$-share $\left\langle\left\langle\textbf{X}^i_B\right\rangle_j\times \textbf{w}_i\right\rangle_i$ and $j$-share $\left\langle\left\langle\textbf{X}^i_B\right\rangle_j\times \textbf{w}_i\right\rangle_j$ via SMM protocol\\ \label{algostep-smm2}
			
		}
		\For{$j = 1$ to $n$ \texttt{in parallel}}
		{
			$\mathcal{A}_j$ locally calculates the summation of all $j$-shares, denoted as $\left\langle\textbf{X}^i_B\times \textbf{w}\right\rangle_j$\\
			$\mathcal{A}_j$ locally calculates $\textbf{err}_j = \left\langle\textbf{X}^i_B\times \textbf{w}\right\rangle_j - \left\langle\textbf{y}^i_B\right\rangle_j$\\
			$\mathcal{A}_j$ clears its $j$-shares to zero value\\
		}
		\For{$j = 1$ to $n$ \texttt{in paralle}}
		{
			$\mathcal{A}_j$ locally calculates $\left\langle\textbf{X}^i_B\right\rangle_j^T\times \textbf{err}_j$ as $j$-share\\
			\For{$k = 1$ to $n$ and $k\neq j$}
			{
				$\mathcal{A}_j$ and $\mathcal{A}_k$ calculates $j$-share $\left\langle\left\langle\textbf{X}^i_B\right\rangle_k^T\times\textbf{err}_j\right\rangle_j$ and $k$-share $\left\langle\left\langle\textbf{X}^i_B\right\rangle_k^T\times\textbf{err}_j\right\rangle_k$ via SMM protocol\\ \label{algostep-smm3}
			} 
		}
		
		\For{$j = 1$ to $n$ \texttt{in parallel}}
		{
			$\mathcal{A}_j$ locally calculates the summation of all $j$-shares, denoted as $\textbf{grad}_j$\\
			$\mathcal{A}_j$ locally updates $\textbf{w}_j$ by $\textbf{w}_j\leftarrow \textbf{w}_j - \frac{\alpha}{|B|}\cdot \textbf{grad}_j$\\
		} 
	
}
\end{algorithm}

\nosection{Threat Model}
Simialr as the existing researches \cite{mohassel2017secureml}, we use the \textit{semi-honest} (passive) adversary model, where the participants stricly follow the protocol, but may try to infer additional information from the middle messages during the protocol execution. Comparing with \textit{malicious} (active) adversary model, semi-honest adversary model enables the development of highly efficient secure computation protocols and has been widely used to develop secure machine learning applications \cite{demmler2015aby}.

\subsection{Protocols For Horizontally Partitioned Data}\label{sec-hor}

For secure linear regression and logistic regression models, the general idea is to locally sample a batch from one party in each iteration and do the computation in Equation \eqref{batch-update} via secret sharing. The crucial point for calculations in Equation \eqref{batch-update} is that every arithmetical operation (including matrix subtraction and multiplication) has to be secret-shared so that none of the other parties can get any information from the current party who generates the batch. 

\nosection{Secure Linear Regression Protocol} 
We first present the secure protocol for linear regresion under horizontally partitioned data in Algorithm \ref{algo-horLR}. 
As the data are horizontally partitioned among parties, we have 
\begin{eqnarray}\small
\textbf{X}^T &= \left(\left(\textbf{X}^1\right)^T, \left(\textbf{X}^2\right)^T, ..., \left(\textbf{X}^n\right)^T\right), \nonumber\\
\textbf{y}^T &= \left(\left(\textbf{y}^1\right)^T, \left(\textbf{y}^2\right)^T, ..., \left(\textbf{y}^n\right)^T\right), \nonumber
\end{eqnarray}
where $\textbf{X}^T$ denotes the transpose of matrix $\textbf{X}$, each sample in the feature matrix $\textbf{X}$ is a row vector and the label vector $\textbf{y}$ is a column vector.

The \textbf{for} loops with `\texttt{in parallel}' means that the $n$ parties should do the steps within the loop in parallel. SMM is short for \textit{Secure Matrix Multiplication.} 
Notice that when we call a SMM protocol, we assume that the parties each has one matrix whose dimensions are aligned to calculate the matrix product. 
Existing SMM protocols can be used as a black-box procedure in Line \ref{algostep-smm1}, \ref{algostep-smm2} and \ref{algostep-smm3}. Line \ref{algostep-rand-batch} is a protocol that determines whose data are selected to update model in the current batch, e.g., this can be simply sequential protocol that indicates all the participants' data are used to train the regression model sequentially. 


\noindent
\textbf{SMM protocols. }
SMM protocol makes sure that each of the parties (1) holds a secret share matrix such that the summation of the shares equals the matrix product, and (2) only gets the output, i.e., the secret share matrix, but not other middle information. 
To date, different SMM protocols have been proposed, they can be divided into two types, i.e., SMM with trusted initializer \cite{de2017efficient} and SMM without trusted initializer \cite{zhu2015ESP}. 
The main difference is that the former needs a trusted third-party to generate Beaver triples for the participates before the protocols starts, while the later one bypass the trusted initializer by sacrificing some security guarantee. 

\nosection{Secure Logistic Regression Protocol} 
One can slightly modify Algorithm \ref{algo-horLR} to change it from linear regression to logistic regression. 
The only difference is the calculation of predictive values, i.e., from $\textbf{x}_i\cdot\textbf{w}$ to $\frac{1}{1 + e^{-\textbf{x}_i\cdot \textbf{w}}}$. 
For logistic regression, to compute the logistic function using secret sharing, we approximate it by a $k$-order polynomial
\begin{equation}
\frac{1}{1 + e^{-z}} \approx \sum\limits_{j=0}^k q_j z^j.
\end{equation}

Existing researches proposed different coefficients ($q_j$) for logistic function \cite{aono2016scalable,chen2018logistic}. Here, we follow the 3-order polynomial in \cite{aono2016scalable}, i.e., $q_0 = 0.5$, $q_1 = 0.197$, $q_2 = 0$, and $q_3 = 0.004$. 
With this in mind, the logistic function can be easily approximated by using SMM protocol. 
We omit the details for conciseness. 

\subsection{Protocol For Vertically Partitioned Data}

\begin{algorithm}[p]
\small
\caption{Linear regression protocol for vertically partitioned data}\label{algo-verLR}
\KwIn {Feature matrices $\textbf{X}^i$ for party $\mathcal{A}_i(i\in[n])$, label vector $\textbf{y}$ located in party $\mathcal{A}_k$, learning rate ($\alpha$), and maximum iterations ($T$)}
\KwOut{Coefficient vectors $\textbf{w}_i$ such that $\textbf{w}^T = \left(\left(\textbf{w}_1\right)^T, \left(\textbf{w}_2\right)^T, \cdots, \left(\textbf{w}_n\right)^T\right)$ where $\textbf{w}$ is the coefficient vector for the regression model}

All $n$ parties agreed on a sequence of batches $B_1, B_2, \cdots, B_T$\label{algostep-batch-align}\\
\For{$i = 1$ to $n$}
{
	$\mathcal{A}_i$ initializes $\textbf{w}_i$ \\
	$\mathcal{A}_i$ generates shares $\{ \left\langle\textbf{w}_i\right\rangle_j \}_{j\in [n]}$ and distributes $\left\{\left\langle \textbf{w}_i \right\rangle_j\right\}_{j\neq i}$ to others\\
}
\For{$t=1$ to $T$}
{
	$\mathcal{A}_k$ generates shares $\left\{\left\langle\textbf{y}_{B_t}\right\rangle_j\right\}_{j\in [n]}$ and distributes $\left\{\left\langle\textbf{y}_{B_t}\right\rangle_j\right\}_{j\neq k}$ to others\\
	\For{$i = 1$ to $n$ }
	{
		$\mathcal{A}_i$ locally calculates $\textbf{X}^i_{B_t} \times \left\langle\textbf{w}_i\right\rangle_i$ as $i$-share \\
		\For{$j = 1$ to $n$ and $j \neq i$}
		{
			$\mathcal{A}_i$ and $\mathcal{A}_j$ calculate $i$-share $\left\langle \textbf{X}^i_{B_t} \times \left\langle\textbf{w}_i\right\rangle_j \right\rangle_i$ and $j$-share $\left\langle \textbf{X}^i_{B_t} \times \left\langle\textbf{w}_i\right\rangle_j \right\rangle_j$ via SMM protocol\\ 
		}
	}
	\For{$j = 1$ to $n$  \texttt{in parallel}}
	{
		$\mathcal{A}_j$ locally calculates the sum of all $j$-shares, denoted as $\left\langle\textbf{X}_{B_t}\times \textbf{w}\right\rangle_j$\\
		$\mathcal{A}_j$ locally calculates $ \left\langle \textbf{err} \right\rangle_j = \left\langle\textbf{X}_{B_t}\times \textbf{w}\right\rangle_j - \left\langle\textbf{y}_{B_t}\right\rangle_j$\\
	}

	\For{$i = 1$ to $n$ }
	{
		$\mathcal{A}_i$ locally calculates $\left\langle \textbf{err} \right\rangle_i ^T \times \textbf{X}^i_{B_t}$ as $i$-share of $\textbf{grad}_i$ \\
		\For{$j = 1$ to $n$ and $j \neq i$}
		{
			$\mathcal{A}_i$ and $\mathcal{A}_j$ calculate $i$-share $\left\langle \left\langle \textbf{err} \right\rangle_i ^t \times \textbf{X}^j_{B_t} \right\rangle_i$ and $j$-share $\left\langle \left\langle \textbf{err} \right\rangle_i ^t \times \textbf{X}^j_{B_t} \right\rangle_j$ of $\textbf{grad}_j$ via SMM protocol\\ 
		}
	}

	\For{$i = 1$ to $n$  \texttt{in parallel}}
	{
		\For{$j = 1$ to $n$}
		{
			$\mathcal{A}_i$ locally calculates the summation of all $j$-shares of $\textbf{grad}_i$, denoted as $\left\langle \textbf{grad}_i \right\rangle_j$\\ 
			$\mathcal{A}_i$ locally updates $\left\langle \textbf{w}_i \right\rangle_j$ by $\left\langle \textbf{w}_i \right\rangle_j \leftarrow \left\langle \textbf{w}_i \right\rangle_j - \frac{\alpha}{|B|} \cdot \left\langle \textbf{grad}_i \right\rangle_j$ \label{algo-proof-update}
		}
	}


}
\For{$i = 1$ to $n$ }
{
	\For{$j = 1$ to $n$ and $j \neq i$}
	{
		$\mathcal{A}_j$ sends $\left\langle \textbf{w}_i \right\rangle_j$ to $\mathcal{A}_i$ \\ 
	}
	$\mathcal{A}_i$ locally calculates the summation of $\{ \left\langle \textbf{w}_i \right\rangle_j \}_{j\in [n]}$, denoted as $\textbf{w}_i$
}
\end{algorithm}




\nosection{Secure Linear Regression Protocol} 
We first summarize the secure protocol for linear regression in Algorithm \ref{algo-verLR}. 
Since the data matrix is vertically partitioned, we only need to secretly share models (Line 4) and labels (Line 7) among participants. 
Each participant calculates shares of the prediction by using SMM protocol (Lines 8-12), and 
During model training, each participant gets a share of the prediction (Line 15), error (Line 16), and gradients (Line 26), updates the shared models (Line 27), and finally reconstruct their corresponding models (Line 35). 
Similar as the protocol in horizontally partitioned data in Algorithm 1, Algorithm \ref{algo-verLR} also works for any number of participants. 


\nosection{Secure Logistic Regression Protocol} 
Similar as the secure logistic regression protocol under horizontally partitioned data, one can use a $k$-order polynomial to approximate the logistic function. After it, the polynomial can be easily calculated using SMM protocol. 

\nosection{Security Proof} 
\begin{theorem}
	Algorithm \ref{algo-verLR} is secure against semi-honest adversaries, as in \textbf{Definition 1}.
\end{theorem}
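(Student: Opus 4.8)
The plan is to give a simulation-based proof following Definition 1, exploiting the fact that Algorithm \ref{algo-verLR} is assembled entirely from additive secret sharing, local linear operations, and calls to the SMM subprotocol. Since the paper treats SMM as a secure black box, I would first lift the analysis into the \emph{SMM-hybrid model}, in which every SMM invocation is replaced by a call to an ideal functionality $\mathcal{F}_{\mathsf{SMM}}$ that returns to each calling party a fresh uniformly random share of the corresponding matrix product. By the sequential composition theorem for semi-honest protocols \cite{goldreich2009foundations}, it then suffices to prove security in this hybrid model: real-world security follows by substituting each ideal call with the concrete SMM protocol and invoking its guaranteed simulator. I would also note at the outset that the pre-agreed batch sequence fixed in Line \ref{algostep-batch-align} makes the trained model a deterministic function of the inputs, so the ``deterministic functionality'' hypothesis of Definition 1 is met, and I would treat an arbitrary single corrupted party $\mathcal{A}_i$, the natural single-corruption extension of the two-party statement.

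Next I would construct, for the corrupted $\mathcal{A}_i$, a polynomial-time simulator $\mathcal{S}_i$ given only $\mathcal{A}_i$'s input (its feature block $\textbf{X}^i$, plus $\textbf{y}$ when $i=k$) and its prescribed output $\textbf{w}_i$. The key observation is that every message $\mathcal{A}_i$ \emph{receives} before the final reconstruction is a single additive share of some value: the model shares $\langle\textbf{w}_j\rangle_i$ from other parties (Line 4), the label shares $\langle\textbf{y}_{B_t}\rangle_i$ (Line 7), and the shares returned by $\mathcal{F}_{\mathsf{SMM}}$. Because additive sharing over the field is perfectly hiding, each of these incoming messages is, both marginally and jointly, uniformly distributed and independent of the honest parties' secrets. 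Hence $\mathcal{S}_i$ samples all of them as independent uniform field elements and generates $\mathcal{A}_i$'s own coins (its initialization of $\textbf{w}_i$ and the randomness used to split its own shares) exactly as the protocol prescribes, while tracking the shares $\mathcal{A}_i$ holds and updates locally across rounds (Line \ref{algo-proof-update}).

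The only step that is not pure randomness is the closing reconstruction loop, where $\mathcal{A}_i$ receives $\{\langle\textbf{w}_i\rangle_j\}_{j\neq i}$ and sums them with its retained share to recover $\textbf{w}_i$. Here $\mathcal{S}_i$ must \emph{program} the simulated incoming shares to be consistent with the known output: it fixes the share the corrupted party has been carrying, draws all but one of the remaining received shares uniformly, and sets the last so that the total equals $\textbf{w}_i$. This exactly reproduces the real-world joint distribution, in which the honest parties' shares are uniform subject to summing to $\textbf{w}_i$ minus the corrupted party's share (a consequence of the uniform gradient shares produced by SMM) and the reconstructed value equals $\textbf{w}_i$ by correctness of additive sharing, giving the joint-output clause of Definition 1.

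Indistinguishability of the simulated and real views I would then establish by a hybrid argument that walks over the $T$ iterations and the individual SMM calls: starting from the real execution, replace each concrete SMM invocation by its simulator (bounding the gap by SMM's security) and each received secret share by a fresh uniform element (a perfect step under exact field arithmetic). The main obstacle I anticipate is precisely this composition bookkeeping, namely arguing that the model shares the corrupted party carries and locally updates \emph{across} rounds remain jointly uniform and leak no cross-round information, so that no correlation accumulates which the per-call SMM simulators or the final programming step fail to absorb; this is where the uniformity of the SMM-produced gradient shares must be used carefully. A secondary, more technical caveat is that the fixed-point encoding in a finite field makes the guarantee computational rather than perfect, so the chain of hybrids must be counted to keep the total distinguishing advantage negligible in the security parameter $\kappa$.
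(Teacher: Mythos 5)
Your proposal is correct and follows the same simulation paradigm as the paper's own proof: both construct, for each corrupted party, a simulator that replaces every received message---the model shares, the label shares, and the SMM outputs---with fresh uniform shares, relying on the perfect hiding of additive sharing and on SMM as a secure black box. You diverge genuinely, and to your advantage, in two places. First, you make the black-box use of SMM rigorous by passing to the $\mathcal{F}_{\mathsf{SMM}}$-hybrid model and invoking the sequential composition theorem of \cite{goldreich2009foundations}, with an explicit hybrid walk over the $T$ iterations and $O(nT)$ SMM calls; the paper simply lists the real and simulated tuples and asserts indistinguishability in one sentence, with no composition argument and no cross-round bookkeeping, which is exactly the obstacle you correctly identify as the technical heart of the proof. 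Second, and more substantively, your treatment of the final reconstruction differs: you \emph{program} the last incoming share so that the simulated shares sum to the prescribed output $\textbf{w}_i$, whereas the paper's simulator computes $\langle\textbf{w}'_1\rangle_2$ by replaying Line \ref{algo-proof-update} on the simulated (uniform) SMM outputs. As written, the paper's replayed share is uncorrelated with the true output, so a distinguisher who knows the inputs (and hence the prescribed output, as Definition 1 permits) could check whether the corrupted party's retained share plus the received share equals $\textbf{w}_1$ and separate the two ensembles; your programming step is the correct repair of this consistency requirement, and it is also where the joint-output clause of Definition 1 actually gets used. You also handle an arbitrary single corrupted party among $n$ directly, where the paper proves only $n=2$ and asserts the extension. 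One small caveat on your side: the claim that the pre-agreed batch sequence of Line \ref{algostep-batch-align} makes the trained model a deterministic function of the inputs overlooks the random initialization of the $\textbf{w}_i$ (Line 3); to invoke the deterministic-functionality form of Definition 1 you should fix or derandomize the initialization, or else argue under the general randomized-functionality definition---a cosmetic repair that does not affect your simulator.
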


\begin{proof}
	We skip the correctness proof of Algorithm \ref{algo-verLR} considering it is obvious. 
	To proof its security, for concise purpose, we use two parties as examples, i.e., $\mathcal{A}_1$ and $\mathcal{A}_2$, $\mathcal{A}_1$ has $\textbf{X}^1$ and $\mathcal{B}$ has $\textbf{X}^2$ and \textbf{y}. We construct two simulators $\mathcal{S}_{\mathcal{A}_1}$ and $\mathcal{S}_{\mathcal{A}_2}$, such that
	\begin{eqnarray}
		\{\mathcal{S}_{\mathcal{A}_1}(1^\kappa,\textbf{X}^1,\textbf{w}_1)\}_{\textbf{X}^1,\textbf{X}^2,\textbf{y},\kappa}\cong\{\mathsf{view}_{\mathcal{A}_1}(\textbf{X}^1,\textbf{X}^2,\textbf{y},\kappa)\}_{\textbf{X}^1,\textbf{X}^2,\textbf{y},\kappa},\\
		\{\mathcal{S}_{\mathcal{B}}(1^\kappa,\textbf{X}^2,\textbf{y},\textbf{w}_2)\}_{\textbf{X}^1,\textbf{X}^2,\textbf{y},\kappa}\cong\{\mathsf{view}_{\mathcal{B}}(\textbf{X}^1,\textbf{X}^2,\textbf{y},\kappa)\}_{\textbf{X}^1,\textbf{X}^2,\textbf{y},\kappa},
	\end{eqnarray}

	where $\mathsf{view}_{\mathcal{A}_1}$ and $\mathsf{view}_{\mathcal{A}_2}$ denotes the views of $\mathcal{A}_1$ and $\mathcal{A}_2$, respectively.
	We prove the above equations for a corrupted $\mathcal{A}_1$ and a corrupted $\mathcal{A}_2$, respectively.
	
	\paragraph{Corrupted $\mathcal{A}_1$.} In this case, we construct a probabilistic polynomial-time simulator $\mathcal{S}_{\mathcal{A}_1}$ that, when given the security parameter $\kappa$, $\mathcal{A}_1$'s input $\textbf{X}^1$ and output $\textbf{w}_1$, can simulate the view of $\mathcal{A}_1$ in the protocol execution. To this end, we first analyze $\mathcal{A}_1$'s view $\mathsf{view}_{\mathcal{A}_1}(\textbf{X}^1,\textbf{X}^2,\textbf{y},\kappa)$ in Algorithm \ref{algo-verLR}. 
	The messages obtained by $\mathcal{A}_1$ are consisted of three parts. 
	The first part is the messages sent before the training process, i.e., $\langle\textbf{y}_1\rangle,\langle\textbf{w}_2\rangle_1$; 
	the second part is the messages sent in the training process, which are from SMM protocols, i.e., 
	$\left\langle \textbf{X}^2_{B_t} \times \left\langle\textbf{w}_2\right\rangle_1 \right\rangle_2$, 
	$\left\langle \textbf{X}^1_{B_t} \times \left\langle\textbf{w}_1\right\rangle_2 \right\rangle_2$, 
	$\left\langle \left\langle \textbf{err} \right\rangle_1 ^t \times \textbf{X}^2_{B_t} \right\rangle_2$, and
	$\left\langle \left\langle \textbf{err} \right\rangle_2 ^t \times \textbf{X}^1_{B_t} \right\rangle_2$; 
	the third part is the message sent after the training process, i.e., $\langle\textbf{w}_1\rangle_2$. 
	Therefore, $\mathsf{view}_{\mathcal{A}_1}(\textbf{X}^1,\textbf{X}^2,\textbf{y},\kappa)$ consists of $\mathcal{A}_1$'s input $\textbf{X}^1$, the shares $\langle\textbf{y}\rangle_1$, $\langle\textbf{w}_2\rangle_1$,  
	$\left\langle \textbf{X}^2_{B_t} \times \left\langle\textbf{w}_2\right\rangle_1 \right\rangle_2$, 
	$\left\langle \textbf{X}^1_{B_t} \times \left\langle\textbf{w}_1\right\rangle_2 \right\rangle_2$, 
	$\left\langle \left\langle \textbf{err} \right\rangle_1 ^t \times \textbf{X}^2_{B_t} \right\rangle_2$, 
	$\left\langle \left\langle \textbf{err} \right\rangle_2 ^t \times \textbf{X}^1_{B_t} \right\rangle_2$, 
	and $\langle\textbf{w}_1\rangle_2$. 

	Given $\kappa$, $\textbf{X}^1$, and $\textbf{w}_1$, $\mathcal{S}_{\mathcal{A}_1}$ generates a simulation of $\mathsf{view}_{\mathcal{A}_1}(\textbf{X}^1,\textbf{X}^2,\textbf{y},\kappa)$ as the following steps.
	\begin{itemize}
		\item $\mathcal{S}_{\mathcal{A}_1}$ randomly selects shares $\langle\textbf{y}'\rangle_1$ and $\langle\textbf{w}'_2\rangle_1$. 

		\item $\mathcal{S}_{\mathcal{A}_1}$ simulates the SMM protocol, randomly generates $\left\langle \textbf{X}'^2_{B_t} \times \left\langle\textbf{w}'_2\right\rangle_1 \right\rangle_2$, 
		$\left\langle \textbf{X}'^1_{B_t} \times \left\langle\textbf{w}'_1\right\rangle_2 \right\rangle_2$, 
		$\left\langle \left\langle \textbf{err}' \right\rangle_1 ^t \times \textbf{X}'^2_{B_t} \right\rangle_2$, 
		$\left\langle \left\langle \textbf{err}' \right\rangle_2 ^t \times \textbf{X}'^1_{B_t} \right\rangle_2$, 
		and takes them as the output for $\mathcal{A}_1$ in SMM protocol.

		\item $\mathcal{S}_{\mathcal{A}_1}$ computes $\langle\textbf{w}'_1\rangle_2$ following Line \ref{algo-proof-update} in Algorithm \ref{algo-verLR} using the simulated results in previous steps. 

		\item $\mathcal{S}_{\mathcal{A}_1}$ generates a simulation of $\mathsf{view}_{\mathcal{A}_1}(\textbf{X}^1,\textbf{X}^2,\textbf{y},\kappa)$ by outputting $(\textbf{X}^1$, $\langle\textbf{y}'\rangle_1$, $\langle\textbf{w}'_2\rangle_1$, $\left\langle \textbf{X}'^2_{B_t} \times \left\langle\textbf{w}'_2\right\rangle_1 \right\rangle_2$, 
		$\left\langle \textbf{X}'^1_{B_t} \times \left\langle\textbf{w}'_1\right\rangle_2 \right\rangle_2$, 
		$\left\langle \left\langle \textbf{err}' \right\rangle_1 ^t \times \textbf{X}'^2_{B_t} \right\rangle_2$, 
		$\left\langle \left\langle \textbf{err}' \right\rangle_2 ^t \times \textbf{X}'^1_{B_t} \right\rangle_2$, $\langle\textbf{w}'_1\rangle_2)$.
	\end{itemize}
	
	Therefore, we have the following two equations:
	\begin{align*}
		&\mathsf{view}_{\mathcal{A}_1}(\textbf{X}^1,\textbf{X}^2,\textbf{y},\kappa)\\
		&=(\textbf{X}^1, \langle\textbf{y}\rangle_1, \langle\textbf{w}_2\rangle_1, \left\langle \textbf{X}^2_{B_t} \times \left\langle\textbf{w}_2\right\rangle_1 \right\rangle_2, 
		\left\langle \textbf{X}^1_{B_t} \times \left\langle\textbf{w}_1\right\rangle_2 \right\rangle_2, 
		\left\langle \left\langle \textbf{err} \right\rangle_1 ^t \times \textbf{X}^2_{B_t} \right\rangle_2, \\
		&\left\langle \left\langle \textbf{err} \right\rangle_2 ^t \times \textbf{X}^1_{B_t} \right\rangle_2, \langle\textbf{w}_1\rangle_2),\\
		&\mathcal{S}_{\mathcal{A}_1}(\textbf{X}^1,\textbf{X}^2,\textbf{y},\kappa)\\
		&=(\textbf{X}^1, \langle\textbf{y}'\rangle_1, \langle\textbf{w}'_2\rangle_1, \left\langle \textbf{X}'^2_{B_t} \times \left\langle\textbf{w}'_2\right\rangle_1 \right\rangle_2, 
		\left\langle \textbf{X}'^1_{B_t} \times \left\langle\textbf{w}'_1\right\rangle_2 \right\rangle_2, 
		\left\langle \left\langle \textbf{err}' \right\rangle_1 ^t \times \textbf{X}'^2_{B_t} \right\rangle_2, \\
		&\left\langle \left\langle \textbf{err}' \right\rangle_2 ^t \times \textbf{X}'^1_{B_t} \right\rangle_2, \langle\textbf{w}'_1\rangle_2).
	\end{align*}
	
	We note that the probability distributions of $\mathcal{A}_1$'s view and $\mathcal{S}_{\mathcal{A}_1}$'s output are computationally indistinguishable.
	This completes the proof in the case of corrupted $\mathcal{A}_1$.
	
	\paragraph{Corrupted $\mathcal{A}_2$.} In this case, we construct a probabilistic polynomial-time simulator $\mathcal{S}_{\mathcal{A}_2}$ that, when given the security parameter $\kappa$, $\mathcal{A}_2$'s input $\textbf{X}^2$, $\textbf{y}$ and output $\textbf{w}_2$, can simulate the view of $\mathcal{A}_2$ in the protocol execution. Following the proof of \textit{Corrupted $\mathcal{A}_1$}, one can simply proof that the probability distributions of $\mathcal{A}_2$'s view and $\mathcal{S}_{\mathcal{A}_2}$'s output are computationally indistinguishable. 
	This completes the proof in the case of corrupted $\mathcal{A}_2$.

\end{proof}

The above proof can be can be extended to multiple parties. Similarly, one can also prove that Algorithm \ref{algo-horLR} is secure against semi-honest adversaries. 

\nosection{Discussion} 
Before applying our proposed Algorithm \ref{algo-verLR} in practice, the batches in each SGD iteration must be aligned for the parties. This is why we need the step in Line \ref{algostep-batch-align}. The alignment operation can be done efficiently by using private set intersection \cite{pinkas2014faster}, which can match the samples in different datasets and keep the secure of these data at the same time.

\section{Experiments and Applications}


\begin{table}[t]
\label{exp-dataset}
\centering
\caption{Dataset statistics.}
\begin{tabular}{|c|c|c|c|c|}
  \hline
  Dataset & \textit{News}  & \textit{Blog} & \textit{Bank}  & \textit{APS}\\
  \hline
  \hline
  Number of feature & 61  & 180 & 17 & 171\\
  \hline
  Number of sample & ~~39,797~~  & ~~52,397~~ & 45,211 & 60,000 \\
  \hline
\end{tabular}
\end{table}

\subsection{Dataset and Data Split Description}
\noindent \textbf{Dataset description}. 
We use four public dataset to perform experiments, i.e., 
online news popularity dataset (\textit{News} for short) \cite{fernandes2015proactive}, BlogFeedback dataset (\textit{Blog} for short) \cite{buza2014feedback}, Bank Marketing dataset (\textit{Bank} for short) \cite{moro2014data}, and APS Failure dataset (\textit{APS} for short) \cite{Dua:2019}. 
The first two datasets are for linear regression task and the last two datasets are for logistic regression task. 
We summarize their statistics in Table 1. 
Note that for all the datasets, we normalize all the features and labels so that they are robust to regression tasks. 

\noindent \textbf{Data split}. 
For simplification, we only assume there are two parties. 
For Horizontally data split setting, we assume the two parties have the same number of samples. 
For vertically data split setting, we assume the two parties have the same number of features. 
Note that, our protocols are suitable for any number parties in practice. 

\subsection{Experimental Settings}

\nosection{Evaluation metric}
We use Root Mean Square Error (RSME) to evaluate the performance of (secure) linear regression models, and choose Area Under the ROC Curve (AUC) to evaluate the performance of (secure) logistic regression models. 


\nosection{Comparison methods}
We propose secure linear regression and logistic regression protocols for both Horizontally (H) and Vertically (V) partitioned data. 
Therefore, we compare with plaintext linear regression (LiRe) and logistic regression (LoRe) to study 
(1) whether they have the same accuracy, and (2) what is the difference of their running time. 
Besides, we apply two SMM protocols, i.e., with Trusted Initializer (TI) and withOut TI (OTI), for our proposed secure regressions. 
Thus, for ablation study, we use Sec-LiRe-TI-H/Sec-LiRe-OTI-H to denote secure linear regression model with/without trusted initializer under horizontally partitioned data, and use Sec-LiRe-TI-V/Sec-LiRe-OTI-V to denote secure linear regression model with/without trusted initializer under vertically partitioned data. 
Similarly, we have Sec-LoRe-TI-H, Sec-LoRe-OTI-H, Sec-LoRe-TI-V, and Sec-LoRe-OTI-V for logistic regression.

\nosection{Parameter setting}
For all the (secret sharing based) models, we set the batch size to $B=5$ and the number of iteration to 100, which means that we use mini-batch gradient descent to train the model. 
We also search the learning rate $\alpha$ in $\{0.001, 0.01, 0.1\}$ to find its best values.

\subsection{Comparison Results}


\begin{table}[t]\label{compare-news}
\centering
\caption{RMSE and running time of (secure) linear regressions on \textit{News} dataset.}
\begin{tabular}{|c|c|c|c|c|c|}
  \hline
  Model & LiRe  & Sec-LiRe-TI-V & Sec-LiRe-OTI-V & Sec-LiRe-TI-H & Sec-LiRe-OTI-H \\
  \hline
  \hline
  RMSE & 0.0096 & 0.0096 & 0.0096  & 0.0096 & 0.0096 \\
  \hline
  Time & 32.41 & 70.28 & 246.31 & 78.52  & 274.11 \\
  \hline
\end{tabular}
\end{table}

We use five-fold cross validation during comparison, and report the average results. 
We summarize the comparison results, including RMSE/AUC and running time (in seconds) in Tables 2-5. 
Note that we omit the offline Beaver triple generation time for the trusted-initializer based methods and use local area network. 
From them, we observe that:
\begin{itemize} [leftmargin=*] \setlength{\itemsep}{-\itemsep}
    \item Our proposed secure linear regression protocols have exactly the same performance with plaintext ones, and secure logistic regression protocols also have comparable performance with plaintext ones. This is because we use a 3-order polynomial to approximate the logistic function.
    \item The computation time of our proposed secure linear regression and logistic regression models are slower than plaintext ones, especially for the secure logistic regression models. This is because it needs more rounds of SMM protocols for secure logistic regression to calculate the 3-order polynomial. 
    Take Sec-LoRe-OTI-H on \textit{News} dataset for example, Sec-LoRe-OTI-H takes 9.06 times longer than plaintext LoRe, which is acceptable considering its ability of protecting data privacy. 
\end{itemize}

\subsection{Time Complexity Analysis}
We now study the time complexity of our proposed secure regression protocols, and report the results in Figure \ref{fig:effect}, where we use the same setting as in comparison. 
From it, we can see that with the increase of data size, the running time of our protocols scale linearly. 
This results demonstrate that our proposed model can be applied into large scale dataset.

\begin{table}[t]\label{compare-blog}
\centering
\caption{RMSE and running time of (secure) linear regressions on \textit{Blog} dataset.}
\begin{tabular}{|c|c|c|c|c|c|}
  \hline
  Model & LiRe  & Sec-LiRe-TI-V & Sec-LiRe-OTI-V & Sec-LiRe-TI-H & Sec-LiRe-OTI-H \\
  \hline
  \hline
  RMSE & 0.0125 & 0.0125 & 0.0125 & 0.0125  & 0.0125  \\
  \hline
  Time & 126.27 & 240.82 & 628.98 & 264.21  & 692.72 \\
  \hline
\end{tabular}
\end{table}

\begin{table}[t]\label{compare-bank}
\centering
\caption{AUC and running time of (secure) logistic regressions on \textit{Bank} dataset.}
\begin{tabular}{|c|c|c|c|c|c|}
  \hline
  Model & LoRe  & Sec-LoRe-TI-V & Sec-LoRe-OTI-V & Sec-LoRe-TI-H & Sec-LoRe-OTI-H \\
  \hline
  \hline
  AUC & 0.7849 & 0.7792 & 0.7792 & 0.7792 & 0.7792  \\
  \hline
  Time & 47.99 & 170.89 & 318.99 & 206.12 & 387.83 \\
  \hline
\end{tabular}
\end{table}

\begin{table}[t]\label{compare-aps}
\centering
\caption{AUC and running time of (secure) logistic regressions on \textit{APS} dataset.}
\begin{tabular}{|c|c|c|c|c|c|}
  \hline
  Model & LoRe  & Sec-LoRe-TI-V & Sec-LoRe-OTI-V & Sec-LoRe-TI-H & Sec-LoRe-OTI-H \\
  \hline
  \hline
  AUC & 0.9807 & 0.9749 & 0.9749 & 0.9749 & 0.9749  \\
  \hline
  Time & 151.72 & 607.09 & 1138.82 & 722.16 & 1374.45 \\
  \hline
\end{tabular}
\end{table}

\begin{figure}[t]
\centering 
\includegraphics[width=0.6\textwidth]{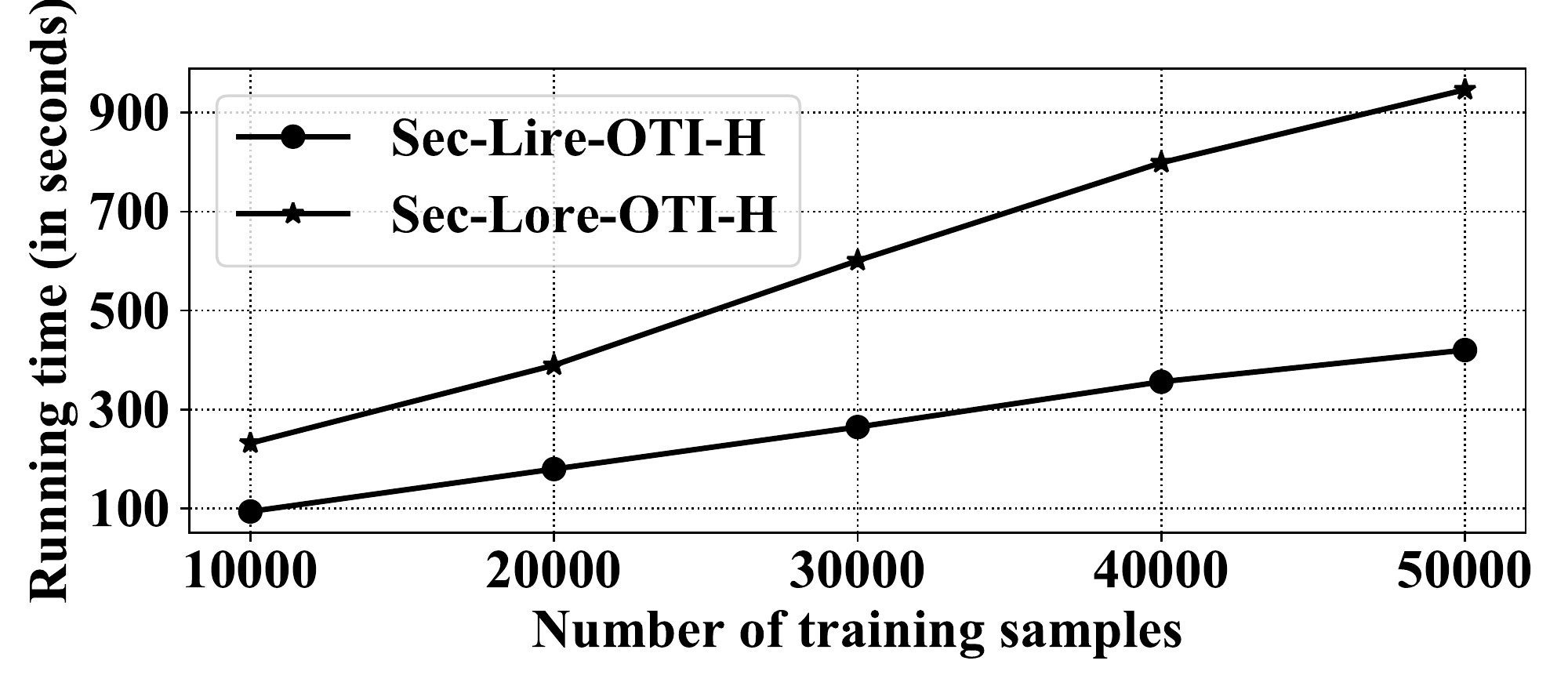} 
\caption{Running time (in seconds) with different data size. }
\label{fig:effect}
\end{figure}

\subsection{Applications}
Our proposed secure regression models have been successfully deployed in various tasks inside and outside Ant Financial, including intelligent marketing, risk control, and intelligent lending. 
For example, CDFinance\footnote{http://www.cdfinance.com.cn/en/index}, a bank in China, together with Ant Financial deployed secure logistic regression models, which not only significantly improved its risk control ability, but also transformed the traditional offline lending mode into an online automatic lending mode. 

\section{Conclusion and Future Work}
In this paper, we first made a computational reduction from the multiparty training of  regression models to secure multiparty matrix summation multiplication. 
Based on secret sharing schemes, we then proposed two secure regression algorithms for horizontally and vertically partitioned data respectively. 
We finally demonstrated the effectiveness and efficiency of our approach by experiments, and presented the real-world applications. In future, we would like to apply our proposed protocols into more machine learning algorithms and deploy them into more applications. 
\bibliographystyle{splncs04}
\bibliography{ref}

\end{document}